\documentclass[letterpaper,10pt,conference]{ieeeconf}
\IEEEoverridecommandlockouts
\usepackage{xcolor}
\makeatletter
\let\NAT@parse\undefined
\makeatother
\usepackage[numbers,sort&compress]{natbib}
\usepackage[colorlinks=true,linkcolor=blue,citecolor=blue,urlcolor=blue]{hyperref} % hyperlinks to the references
\usepackage{amsthm}
\usepackage{amsmath}
\usepackage{amssymb}
\usepackage{algorithm}
\usepackage{algpseudocode}
\usepackage{tikz}
\usetikzlibrary{arrows.meta, positioning, calc}

\usepackage{graphicx}

\usepackage{float}
\usepackage{booktabs}
\usepackage{multirow}

\usepackage{soul}
\usepackage[normalem]{ulem}

\usepackage{booktabs}
\usepackage{tabularx}
\usepackage{caption}

\definecolor{papercream}{RGB}{250,247,230}

\hypersetup{colorlinks=true}

\definecolor{methodleap}{rgb}{0.000,0.450,0.700}
\definecolor{methodaligator}{rgb}{0.900,0.620,0.000}
\definecolor{methodcrocoddyl}{rgb}{0.800,0.400,0.650}

\providecommand{\methodname}{\text{LEAP}}

\newcommand{\cesar}[1]{
  {\textnormal{
    \color{blue}\textbf{(Cesar: #1)}
    }
  }
}

\long\def\ai_help#1{{\color{violet}#1}}

\providecommand{\W}{\mathcal{W}}

\providecommand{\A}{\mathcal{A}}

\providecommand{\R}{\mathcal{R}}

\providecommand{\Ii}{\mathcal{I}}

\providecommand{\E}{\mathcal{E}}

\providecommand{\W}{\mathcal{W}}

\providecommand{\RR}{\mathbb{R}}

\newtheorem{defn}{Definition}

\newtheorem{lem}[defn]{Lemma}
\newtheorem{prop}[defn]{Proposition}

\newtheorem{problem}[defn]{Problem}
\providecommand{\x}{x}
\providecommand{\xt}{x(t)}

\providecommand{\uu}{u}
\providecommand{\ut}{u(t)}

\providecommand{\qt}{q(t)}
\providecommand{\qdt}{\dot{q}(t)}

\providecommand{\vt}{v(t)}
\providecommand{\vdt}{\dot{v}(t)}

\providecommand{\sut}{\mathrm{s.t.}}
\newif\ifshowrevisions
\showrevisionsfalse
\definecolor{revold}{RGB}{190,30,45}   % struck-out old text
\definecolor{revnew}{RGB}{0,110,180}   % replacement / added text

\newcommand{\revise}[2]{%
  \ifshowrevisions
    {\color{revold}\sout{#1}}\,{\color{revnew}#2}%
  \else
    #2%
  \fi
}

\newcommand{\added}[1]{\ifshowrevisions{\color{revnew}#1}\else#1\fi}

\title{LEAP: Gradient Flow Augmented Lagrangian
Trajectory Optimization for Contact Constrained Robots}

\author{%
    C\'esar E. Ramos Chuquiure$^{1}$,
    Ryan Tsai$^{1}$,
    Yutong Zhou$^{1}$,
    Zihang Huang$^{2}$\\
    Vansh Thakur$^{1}$,
    Akshunn Trivedi$^{1}$,
    Zachary Lin$^{3}$,
    Pearl Lin$^{2}$\\
    Ruikai Yang$^{2}$,
    Challen Enninful Adu$^{1}$,
    Ram Vasudevan$^{1}$%
    \thanks{$^{1}$Department of Robotics,
    University of Michigan, Ann Arbor, MI, USA.}%
    \thanks{$^{2}$Computer Science and Engineering,
    University of Michigan, Ann Arbor, MI, USA.}%
    \thanks{$^{3}$Department of Electrical Engineering
    and Computer Science,
    University of Michigan, Ann Arbor, MI, USA.}%
    \thanks{This work was supported by AFOSR MURI FA9550-23-1-0400.}%
}

\makeatletter
\long\def\IEEEaftertitletext#1{\def\@IEEEaftertitletext{#1}}
\IEEEaftertitletext{%
    \vspace{-0.5\baselineskip}%
    \noindent\begin{minipage}{\textwidth}
        \centering
        \includegraphics[width=\textwidth]{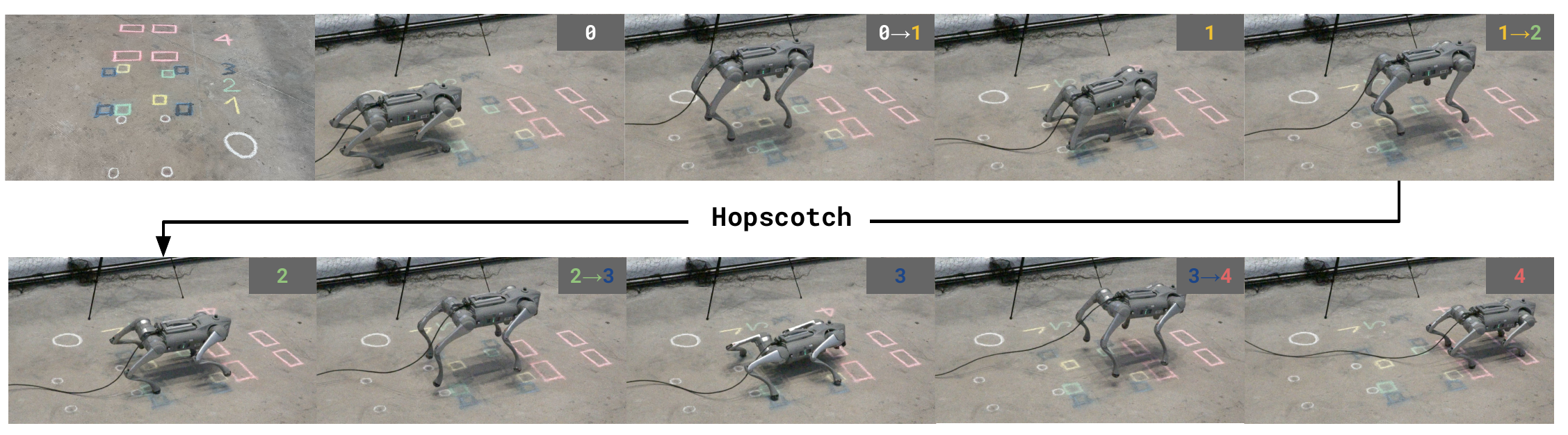}\par
        \refstepcounter{figure}\label{fig:cover}%
        \def\@captype{figure}%
    \@makecaption{Fig.~\thefigure}{%
        Hardware execution of a hopscotch trajectory generated by
        \methodname{}. 
        The quadruped performs successive jumps between prescribed foothold patterns, shown in the upper-left panel.
        Numbered frames indicate stages of the sequence, while
        $i \!\to\! i+1$ labels indicate transitions between them.
        Frames proceed chronologically from left to right across
        the top row, then the bottom row.
    }\par
    \end{minipage}%
    \vspace{0.4\baselineskip}%
}
\makeatother
\begin{document}
    \maketitle

    \begin{abstract}
        Generating dynamically feasible motions for legged and manipulating robots requires solving a large, nonlinear, non-convex optimal control problem (OCP) in which contact wrenches must satisfy friction-cone constraints throughout the motion, not only at isolated instants.
        Existing solvers are sensitive to the initial guess and scale poorly with the number of joints.
        This paper introduces \methodname{}, which transcribes the OCP with pseudospectral collocation and, instead of a Newton or SQP solve, integrates a primal--dual gradient flow of a Powell--Hestenes--Rockafellar augmented Lagrangian to compute a solution.
        Because the flow is stiff, this paper designs a linearly implicit integrator with a Gauss--Newton metric, momentum, and a merit-based step controller that keeps the augmented Lagrangian monotone.
        \revise{Across benchmark problems on five robots, from a 2-joint hopper to 32-joint humanoids, \methodname{} solves roughly an order of magnitude faster than a variety of state of the art methods, reaches lower constraint violation under a fixed time budget, and is less sensitive to hyper-parameters.}{Across locomotion benchmarks on five robot platforms, from planar hoppers to 32-joint humanoids, \methodname{} is the only method to solve every tested problem, achieves substantial speedups over Crocoddyl and Aligator on several benchmarks, and \revise{attains lower constraint violation across a range of compute budgets}{reaches trackable references at smaller compute budgets}.}
        The resulting trajectories are validated by tracking them in a high-fidelity simulator and on hardware.
        We will make the code public upon final acceptance of this paper.
    \end{abstract}

    % Sections
    % section 00 won't appear in the to-be-reviewed manuscript
    % it is something internal so it is easier to read
    % \input{sections/00_notation.tex} 
    \section{Introduction}

Robots interact with the world through contact: a humanoid walks by planting and lifting its feet, a manipulator pushes, grasps and releases objects, and a quadruped climbs by loading and unloading its limbs. 
Planning such motions means choosing joint torques and contact forces that respect the robot's dynamics, its actuation limits, and the physics of contact, which requires the contact wrench to stay inside a friction cone at every instant. 
Trajectory optimization formalizes this as an Optimal Control Problem (OCP) over hybrid dynamics: minimize a cost, subject to the equations of motion in each contact mode, and the constraints above.

Solving this OCP is difficult for three reasons. 
The dynamics are nonlinear and change discontinuously across contact modes.
The constraints, in particular the friction cones and the dynamics themselves, are nonlinear and non-convex, and any solver must decide how to enforce them and where in time.
And the resulting problem is non-convex in the decision variables, so the outcome depends on the initial guess, and a good one is often unavailable for a new behavior or a new robot. 
We refer to these as the dynamics, constraint, and initialization challenges.

State-of-the-art solvers exploit the temporal structure of the dynamics to make each iteration cheap. 
Differential Dynamic Programming (DDP) methods such as Crocoddyl \citep{croco} and Aligator \citep{aligator} combine quadratic approximations of the value function with multiple shooting, and direct methods such as acados \citep{acados, raptor} transcribe the OCP into a nonlinear program (NLP) with banded structure.
In both families each iteration is a Newton-type step, and this is what makes the constraint and initialization challenges persist. 
A Newton step is only meaningful near a solution, so these solvers rely on a good warm start and on line-search or trust-region globalization whose parameters must be tuned per problem. 
Constraints are handled by penalties, by projection onto a box, or by an augmented Lagrangian inside the Riccati recursion, and they are enforced only at discrete knots, so the friction cone can be violated between them. In Section \ref{sec:simulation_experiments} illustrates that, on the contact-constrained problems, these solvers scale poorly with the number of joints and their performance depends strongly on hyper-parameters.

Gradient flows (GF) offer a different way to handle the initialization challenge. 
Instead of taking Newton steps, the trajectory is deformed continuously in a pseudo-time until it becomes stationary for an action functional. 
Every step of the flow is a descent step that is well defined anywhere, not only near a solution, so the method tolerates poor initial guesses and admits continuation naturally.
\cite{aghf_og} and \cite{liu2019homotopy} introduced the Affine Geometric Heat Flow (AGHF), whose steady state solves the OCP, and \cite{fan2019midair}, \cite{aghf_free_bcs} and \cite{aghf_legged} extended it to free boundary conditions and to low-dimensional legged systems. 
\cite{phlame} and \cite{blaze} improved the numerics to reach higher-dimensional systems, and \cite{kim_el_aghf}\revise{and \textbf{magma} \cesar{arxiv magma?} generalized the Lagrangian of the action functional, the latter making explicit the link between the AGHF and gradient flows.}{extends the idea to extended Lagrangians.}
What these methods gain on initialization they give back on the other two challenges. 
Constraints enter as quadratic penalties, so they are never satisfied exactly, and raising the penalty to reduce the violation makes the flow stiffer. 
The stiff flow is then integrated with general-purpose solvers whose error control is unaware of the optimization structure, so the dynamics challenge reappears as a numerical one.

This paper introduces \methodname{}, a gradient-flow trajectory optimizer for robots in contact under a prescribed contact sequence, together with a numerical scheme designed for that flow. 
It keeps the robustness to initialization of gradient flows while resolving the aforementioned constraint and stiffness problems.
Our contributions are four-fold:
First, we utilize a  Powell--Hestenes--Rockafellar (PHR) augmented Lagrangian formulation and derive a two-time-scale primal--dual flow to enable exact constraint handling (Section \ref{sec:gradient_flow}). 
Second, we describe how to extend this approach to hybrid dynamical systems and illustrate how to ensure that wrench feasibility is satisfied for all time\revise{}{in Lemma \ref{lem:bezier_convex_contraints}.}
Third, we develop a linearly implicit solver to ensure that the augmented Lagrangian monotonically decreases along the numerical solution to the primal-dual flow (Section \ref{sec:numerical_scheme}). 
\revise{Finally, we compare \methodname{} against acados, Crocoddyl and Aligator on solve time, constraint violation under a time budget, and sensitivity to hyper-parameters across three tasks and seven robots with two to thirty-two joints with solution quality judged by a whole-body tracking controller in a high-fidelity simulator (Section \ref{sec:simulation_experiments}).}{
    Finally, we compare \methodname{} against Crocoddyl and Aligator on solve time and \revise{constraint violation}{terminal tracking accuracy} under a time budget across two tasks and five robots with two to thirty-two joints with solution quality judged by a whole-body tracking controller in a high-fidelity simulator (Section \ref{sec:simulation_experiments}).
}
A hopscotch trajectory is executed on a Unitree Go2 quadruped (Section~\ref{sec:hardware_experiments}).
    \section{Preliminaries}
\label{sec:preliminaries}

This section introduces the concepts necessary to construct the gradient-flow based trajectory optimization algorithm \methodname{}.

\subsection{Robot dynamics with a prescribed contact sequence}
\label{subsec:hybrid_robot_system}

Consider a robot with configuration $\qt \in \RR^{n_q}$, generalized velocity $\vt \in \RR^{n_v}$ related by $\qdt = N(\qt)\vt$, and state $\xt = (\qt, \vt) \in \RR^{n_x}$ on the horizon $I := [0, T]$.
Let $E$ be the set of possible contacts.
A contact $\ell \in E$ is described by a smooth holonomic residual $\phi_\ell : \RR^{n_q} \to \RR^{d_\ell}$, and is active while $\phi_\ell(\qt) = 0$.
A mode is a set of simultaneously active contacts, which we identify with the subset $j \subseteq E$.
Its stacked residual $\Phi_j := (\phi_\ell)_{\ell \in j}$ and contact Jacobian $J_{c,j}(q)$ are related by $\dot{\Phi}_j = J_{c,j}(q)\,v$.
Within mode $j$ the state obeys the differential algebraic equation
\begin{subequations}
\label{eqn:mode_dae}
\begin{align}
    \hspace*{-0.25cm} M(\qt)\vdt + C(\qt, \vt) &= B\ut + J_{c,j}^T(\qt)\,\lambda_j(t), \label{eqn:mode_dae_eom} \\
    \Phi_j(\qt) &= 0, \label{eqn:mode_dae_holonomic}
\end{align}
\end{subequations}
where $M$ is the mass matrix, $C$ collects Coriolis and gravity terms, $B$ is the actuation matrix, $\ut \in U$ is the control input and $\lambda_j(t)$ is the stacked contact wrench of the contacts in $j$.

We consider motions with a prescribed mode sequence $(j_0, \ldots, j_{M-1})$ and prescribed switching times $t_0 = 0 < t_1 < \cdots < t_{M-1} < t_M = T$, so that mode $j_i$ is active on $I_i := [t_i, t_{i+1}]$.
At a switching time $t_i$ the configuration is continuous, and the velocity is mapped by a reset $v(t_i^+) = R_{j_{i-1}, j_i}(q(t_i), v(t_i^-))$, which we model as a Newton impact \citep{robot_dynamics} when a contact becomes active and as the identity when a contact is released.
This is a hybrid system in the sense of \citet{burden2015} whose discrete transitions are fixed in advance, so no guard conditions are needed.

\subsection{Contact Sequence Robot OCP}
\label{subsec:contact_sequence_robot_ocp}

\begin{problem}
\label{prob:contact_sequence_ocp}
Given the mode sequence and switching times above, the Contact Sequence Robot OCP is
\begin{subequations}
\begin{align}
\underset{\x, \uu, \{ \lambda_i \} }{\inf} \, & \sum_{i=0}^{M-1} \int_{t_i}^{t_{i+1}} l_i(\xt, \ut, \lambda_i(t))~dt \label{eqn:ocp_integral} \\
\sut \quad & g_{i, k}(\xt, \ut, \lambda_i(t)) \leq 0 \quad \forall k \in \Ii_i,\ \forall t \in I_i, \label{eqn:ocp_inequality_constraints} \\
& g_{i, k}(\xt, \ut, \lambda_i(t)) = 0 \quad \forall k \in \E_i,\ \forall t \in I_i, \label{eqn:ocp_equality_constraints}
\end{align}
\end{subequations}
for $i \in \{0, \ldots, M-1\}$, where $\lambda_i$ abbreviates $\lambda_{j_i}$.
\end{problem}
The equality constraints $\E_i$ contain the dynamics \eqref{eqn:mode_dae}, the reset map at $t_i$, and the boundary conditions; the inequality constraints $\Ii_i$ contain the friction cone on $\lambda_i$ and bounds on $\x$ and $\uu$.
All constraints must hold for every $t$, not only at a finite set of times.

\subsection{Polynomial representations}
\label{subsec:interpolating_polynomials}

To transcribe Problem \ref{prob:contact_sequence_ocp} we represent each trajectory on an interval, rescaled to $[-1, 1]$, by a polynomial of degree $p$ in one of two forms \citep{trefethen2000}.
The first is a Lagrange interpolant through the Chebyshev--Gauss--Lobatto (CGL) nodes
\begin{align}
\label{eqn:cgl_grid}
    \tau_n := \cos\left(\frac{\pi n}{p}\right), \quad n \in \{0, \ldots, p\},
\end{align}
parameterized by its nodal values $f(\tau_n)$.
The second is a B\'ezier curve
\begin{align}
\label{eqn:bezier_approx}
    f(\tau) = \sum_{b=0}^{p} B_b^p(\tau)\,\bar{f}_b,
\end{align}
parameterized by its control points $\bar{f}_b$, where $B_b^p$ are the Bernstein basis polynomials on $[-1, 1]$.
Either way a trajectory becomes a finite vector of coefficients.
The B\'ezier form has a property that the nodal form lacks, and which we use to enforce wrench constraints for all time.
\begin{lem}[B\'ezier control points certify convex constraints]
\label{lem:bezier_convex_contraints}
Let $\W \subseteq \RR^n$ be convex and let $\lambda$ be a B\'ezier curve with control points $\bar{\lambda}_0, \ldots, \bar{\lambda}_p$.
If $\bar{\lambda}_b \in \W$ for all $b$, then $\lambda(\tau) \in \W$ for all $\tau \in [-1, 1]$.
\end{lem}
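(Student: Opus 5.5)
The plan is to show that, for each fixed $\tau \in [-1,1]$, the point $\lambda(\tau)$ is a convex combination of the control points $\bar{\lambda}_0, \ldots, \bar{\lambda}_p$. The conclusion then follows because a convex set contains every convex combination of its points. Everything therefore reduces to two properties of the Bernstein basis on $[-1,1]$: nonnegativity and partition of unity.

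First I will write the basis explicitly. With the affine change of variables $s = (\tau+1)/2 \in [0,1]$, the degree-$p$ Bernstein polynomials are
\begin{align*}
B_b^p(\tau) = \binom{p}{b} s^b (1-s)^{p-b}, \quad b \in \{0, \ldots, p\}.
\end{align*}
For $\tau \in [-1,1]$ both $s$ and $1-s$ lie in $[0,1]$, so every $B_b^p(\tau) \geq 0$. The binomial theorem gives
\begin{align*}
\sum_{b=0}^{p} B_b^p(\tau) = \bigl(s + (1-s)\bigr)^p = 1 .
\end{align*}
Hence, by \eqref{eqn:bezier_approx}, for each fixed $\tau$ the value $\lambda(\tau) = \sum_b B_b^p(\tau)\,\bar{\lambda}_b$ is a combination of the control points with nonnegative weights summing to one.

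Second, I will show by induction on the number of points that a convex set $\W$ is closed under finite convex combinations. The base case of two points is the definition of convexity. For the inductive step, take weights $w_0, \ldots, w_p$ and assume $w_p < 1$; the case $w_p = 1$ is trivial. Write
\begin{align*}
\sum_{b=0}^{p} w_b \bar{\lambda}_b = (1-w_p)\sum_{b<p}\frac{w_b}{1-w_p}\bar{\lambda}_b + w_p \bar{\lambda}_p .
\end{align*}
The inner sum is a convex combination of $p$ points, so it lies in $\W$ by the induction hypothesis. The whole expression is then a two-point convex combination, so it lies in $\W$ as well. Applying this with $w_b = B_b^p(\tau)$ gives $\lambda(\tau) \in \W$ for every $\tau$.

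No step is a real obstacle. The only care needed is to fix the convention for $B_b^p$ on $[-1,1]$, via the affine map, so that nonnegativity is evident on the whole interval. Note also that nonnegativity fails outside $[-1,1]$, which is why the certificate holds only on the rescaled interval.
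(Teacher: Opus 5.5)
Your proof is correct and follows the same route as the paper, which simply observes that the Bernstein basis is nonnegative and sums to one on $[-1,1]$, so that $\lambda(\tau)$ is a convex combination of the control points. You additionally spell out three steps the paper leaves implicit: the affine change of variables, the binomial-theorem argument for the partition of unity, and the induction showing that a convex set is closed under finite convex combinations.
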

\begin{proof}
The Bernstein basis is nonnegative and sums to one, so $\lambda(\tau)$ is a convex combination of the control points.
\end{proof}

    \section{Transcribing the OCP into a Gradient Flow}
\label{sec:gradient_flow}

This section transcribes Problem \ref{prob:contact_sequence_ocp} into a nonlinear program and derives the pseudo-time Gradient Flow (GF) that \methodname{} integrates to solve it.

\subsection{Transcription}
\label{subsec:transcription}

On the mode interval $I_i = [t_i, t_{i+1}]$, we introduce the reference time
\begin{equation}
\label{eqn:reference_time}
    \tau = \frac{2(t - t_i)}{t_{i+1} - t_i} \in [-1, 1],
\end{equation}
and represent each decision function on $I_i$ by a polynomial in $\tau$ using the forms of Section \ref{subsec:interpolating_polynomials}.
The state and control are Lagrange interpolants on the CGL nodes \eqref{eqn:cgl_grid}, so their coefficients are the nodal values $q(\tau_n), v(\tau_n), u(\tau_n)$, and time derivatives are obtained by the CGL differentiation matrix \citep{trefethen2000}.
% $D$ \citep{trefethen}, $\dot{v}(\tau_n) = \frac{2}{t_{i+1} - t_i}\sum_m D_{nm} v(\tau_m)$.
The contact wrench is a B\'ezier curve \eqref{eqn:bezier_approx} with control points $\bar{\lambda}_{i,0}, \ldots, \bar{\lambda}_{i,p}$.
Each field can use a different degree.
We call one such polynomial segment an \emph{element} and the collection of elements the \emph{mesh}; initially there is one element per mode interval, and the $h$-refinement of Section \ref{subsec:globalization} may later split an element in two.
Let $z_i$ collect the coefficients of mode $i$ and $z = (z_0, \ldots, z_{M-1}) \in \RR^{n_z}$ all decision variables.

The cost \eqref{eqn:ocp_integral} is approximated by Clenshaw--Curtis quadrature on the CGL nodes.
The constraints are transcribed in two ways.
The dynamics \eqref{eqn:mode_dae}, the bounds on $\x$ and $\uu$, and any other constraint involving the state or control are collocated at the CGL nodes.
The friction cone, which is convex in $\lambda_i$, is imposed on the B\'ezier control points $\bar{\lambda}_{i,b}$ rather than at the nodes, so by Lemma \ref{lem:bezier_convex_contraints} it holds for every $\tau \in [-1, 1]$ and not only at the collocation nodes.
The reset map at $t_i$ becomes an algebraic constraint linking the last node of element $i-1$ to the first node of element $i$, $q$ being continuous and $v$ mapped by $R_{j_{i-1}, j_i}$, and the boundary conditions are imposed on the first and last nodes of the mesh.
Constraints collocated at the nodes may be violated between them; Section \ref{sec:numerical_scheme} describes how to control this error through $h$-refinement and penalties.

Collecting the rows, we obtain the nonlinear program
\begin{problem}[Transcribed OCP]
\label{prob:nlp}
\begin{align}
    \min_{z \in \RR^{n_z}} \quad & J_d(z) \\
    \sut \quad & g_i(z) = 0, \quad i \in \E_d, \\
    & g_i(z) \leq 0, \quad i \in \Ii_d,
\end{align}
\end{problem}
\noindent where $\E_d$ collects the collocated dynamics, the reset maps and the boundary conditions, $\Ii_d$ collects the friction-cone rows on the control points and the collocated bounds, and $m_d := |\E_d| + |\Ii_d|$.

\subsection{Gradient Flow}
\label{subsec:gradient_flow}

We now define a flow in a pseudo-time $s$ whose equilibria are the KKT points of Problem \ref{prob:nlp}, built on the Augmented Lagrangian of Powell, Hestenes and Rockafellar (PHR) \citep{hestenes1969, powell1969, rockafellar1973}.

\begin{defn}[PHR augmented Lagrangian]
\label{defn:phr_al}
For each constraint $i \in \E_d \cup \Ii_d$, let $\gamma_i$ be its multiplier and $\mu_i > 0$ its penalty.
The \emph{effective residual} and \emph{effective multiplier} are
\begin{align}
\label{eqn:effective_constraint_residual}
c_i^{\mathrm{eff}}(z, \gamma) &:= \begin{cases}
g_i(z), & i \in \E_d, \\
\max\!\left(g_i(z), -\tfrac{\gamma_i}{\mu_i}\right), & i \in \Ii_d,
\end{cases} \\
\label{eqn:effective_lagrange_multiplier}
w_i(z, \gamma) &:= \gamma_i + \mu_i\, c_i^{\mathrm{eff}}(z, \gamma),
\end{align}
respectively, and the PHR augmented Lagrangian is
\begin{align}
\label{eq:phr_aug_lagrangian}
L_A(z, \gamma; \mu) := J_d(z) + \sum_{i=1}^{m_d} \frac{w_i(z, \gamma)^2 - \gamma_i^2}{2\mu_i}.
\end{align}
\end{defn}
\noindent For an equality constraint the $i$th summand equals $\gamma_i g_i + \frac{\mu_i}{2} g_i^2$, and for an inequality constraint $w_i = [\gamma_i + \mu_i g_i]_+$, so $L_A$ is continuously differentiable with
\begin{align}
\label{eqn:phr_gradients}
\nabla_z L_A &= \nabla J_d(z) + \sum_{i=1}^{m_d} w_i(z, \gamma)\, \nabla g_i(z),\\
\nabla_\gamma L_A &= c^{\mathrm{eff}}(z, \gamma).
\end{align}

\begin{defn}[Two-time-scale flow]
\label{defn:two_time_scale_flow}
Let $z : [0, s_{\max}] \to \RR^{n_z}$ be a curve in pseudo-time $s$, and partition $[0, s_{\max}]$ into windows $[s_k, s_{k+1})$.
On each window $z$ follows the primal flow at frozen multipliers $\gamma = \gamma(s_k)$,
\begin{align}
    \label{eqn:frozen_flow}
    \dot{z}(s) = -\nabla_z L_A(z(s), \gamma; \mu),
\end{align}
and at $s_{k+1}$ the multipliers jump by the damped update
\begin{align}
\label{eqn:multiplier_update}
\gamma^+ := \gamma + \tau_\gamma\, \mu \odot c^{\mathrm{eff}}(z(s_{k+1}), \gamma), \qquad \tau_\gamma \in (0, 1].
\end{align}
\end{defn}

\begin{prop}[Equilibria and monotonicity]
\label{prop:flow_properties}
Fix $\mu \in \RR^{m_d}_{>0}$.
\begin{enumerate}
    \item[(i)] If $\nabla_z L_A(z, \gamma; \mu) = 0$ and $\gamma^+ = \gamma$, then $(z, \gamma)$ is a KKT point of Problem \ref{prob:nlp}.
    \item[(ii)] Along any solution of \eqref{eqn:frozen_flow} on a window,
    \begin{align}
        \label{eqn:lyapunov_decrease}
        \frac{d}{ds} L_A(z(s), \gamma; \mu) = -\|\nabla_z L_A(z(s), \gamma; \mu)\|^2 \leq 0,
    \end{align}
    with equality if and only if $\nabla_z L_A = 0$, so $L_A(\cdot, \gamma; \mu)$ is non-increasing on every window and the equilibria of \eqref{eqn:frozen_flow} are the stationary points of $\min_z L_A(z, \gamma; \mu)$.
\end{enumerate}
\end{prop}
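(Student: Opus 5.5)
For part (i), the plan is to read off the KKT conditions from the two hypotheses, working constraint by constraint. Since $\tau_\gamma > 0$ and every $\mu_i > 0$, the fixed-point condition $\gamma^+ = \gamma$ in \eqref{eqn:multiplier_update} is equivalent to $c^{\mathrm{eff}}(z,\gamma) = 0$. By \eqref{eqn:effective_lagrange_multiplier} this gives $w_i(z,\gamma) = \gamma_i$ for every $i$. Substituting into \eqref{eqn:phr_gradients} turns $\nabla_z L_A = 0$ into the stationarity condition $\nabla J_d(z) + \sum_i \gamma_i \nabla g_i(z) = 0$, with $\gamma$ playing the role of the KKT multipliers. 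The remaining conditions follow from $c^{\mathrm{eff}}_i = 0$:
\begin{itemize}
\item For $i \in \E_d$, it is simply $g_i(z) = 0$, which is primal feasibility.
\item For $i \in \Ii_d$, it reads $\max(g_i, -\gamma_i/\mu_i) = 0$, so both $g_i \le 0$ and $-\gamma_i/\mu_i \le 0$, i.e.\ $\gamma_i \ge 0$ (dual feasibility).
\item Also for $i \in \Ii_d$, at least one of the two arguments of the max equals zero. If $g_i < 0$, then $\gamma_i = 0$, so $\gamma_i g_i = 0$ in every case (complementary slackness).
\end{itemize}
Together these are exactly the KKT conditions of Problem~\ref{prob:nlp}. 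The only step needing care is the case analysis on the max for the inequality rows. I would state the equivalent form $w_i = [\gamma_i + \mu_i g_i]_+$ as a consistency check, since $\gamma_i = [\gamma_i + \mu_i g_i]_+$ is the standard complementarity fixed-point characterization.

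For part (ii), I would use the chain rule along a solution of \eqref{eqn:frozen_flow}. The multipliers are frozen on the window, so $\tfrac{d}{ds} L_A(z(s),\gamma;\mu) = \nabla_z L_A \cdot \dot z = -\|\nabla_z L_A\|^2$. This requires $L_A(\cdot,\gamma;\mu)$ to be $C^1$ in $z$, which the text after Definition~\ref{defn:phr_al} already notes. I would justify it briefly: for an inequality row the summand is $([\gamma_i+\mu_i g_i]_+^2 - \gamma_i^2)/(2\mu_i)$. The map $r\mapsto [r]_+^2$ is $C^1$ with derivative $2[r]_+$, so the summand is $C^1$ in $z$ whenever $g_i$ is $C^1$, and its gradient is $w_i\nabla g_i$, consistent with \eqref{eqn:phr_gradients}. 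The gradient is then only Lipschitz rather than $C^1$, so I would also note that solutions of \eqref{eqn:frozen_flow} exist and are absolutely continuous. This holds by Picard--Lindelöf if the $g_i$ are $C^{1,1}$; otherwise Peano's theorem gives existence and the computation still holds.

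The equality case in \eqref{eqn:lyapunov_decrease} is immediate: the derivative vanishes exactly when the norm does. Non-increase of $L_A$ on each window follows by integrating the nonpositive derivative. Finally, the equilibria of \eqref{eqn:frozen_flow} are the points where the right-hand side vanishes, i.e.\ $\nabla_z L_A = 0$, and these are by definition the stationary points of $\min_z L_A(z,\gamma;\mu)$.

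The main obstacle, though a mild one, is the regularity bookkeeping at the kink of the $\max$: I must make sure differentiability holds there so that the chain rule is legitimate. Everything else is direct substitution.
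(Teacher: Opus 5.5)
Your proposal is correct and matches the paper's proof essentially step for step. In (i) you use $\tau_\gamma\mu_i>0$ to get $c^{\mathrm{eff}}=0$, then $w_i=\gamma_i$, then feasibility, sign and complementarity from the case analysis of the $\max$, then stationarity via \eqref{eqn:phr_gradients}; in (ii) you apply the chain rule along \eqref{eqn:frozen_flow}. Your extra regularity bookkeeping is sound: the $C^1$ property of $L_A$ at the kink of $[\cdot]_+^2$ is what legitimizes the chain rule (the paper asserts it after Definition~\ref{defn:phr_al}), and the existence discussion is harmless but not needed, since (ii) concerns any given solution.
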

\begin{proof}
(i) Since $\tau_\gamma \mu_i > 0$, $\gamma^+ = \gamma$ gives $c^{\mathrm{eff}} = 0$.
For $i \in \E_d$ this is $g_i = 0$; for $i \in \Ii_d$, $\max(g_i, -\gamma_i/\mu_i) = 0$ gives $g_i \leq 0$, $\gamma_i \geq 0$ and $\gamma_i g_i = 0$.
Then $w_i = \gamma_i$ for all $i$, and \eqref{eqn:phr_gradients} turns $\nabla_z L_A = 0$ into $\nabla J_d + \sum_i \gamma_i \nabla g_i = 0$, which completes the KKT conditions.
(ii) Apply the chain rule along \eqref{eqn:frozen_flow}: $\frac{d}{ds} L_A = \nabla_z L_A^T \dot{z}$.
\end{proof}

\noindent The multiplier update \eqref{eqn:multiplier_update} ascends $L_A$, so monotonicity is a property of the primal flow between multiplier updates and not of the composed map $\gamma \mapsto \gamma^+$.
This is the property the integrator of Section \ref{sec:numerical_scheme} is designed to preserve.
    \section{Numerical scheme}
\label{sec:numerical_scheme}

Section \ref{sec:gradient_flow} reduced trajectory optimization to integrating the flow \eqref{eqn:frozen_flow}.
Doing so with an off-the-shelf integrator fails for three reasons.
First, the flow is stiff when the penalties are large \citep{hairer1996}.
Second, its Jacobian jumps whenever an inequality constraint enters or leaves the active set $\A(z, \gamma) := \E_d \cup \{i \in \Ii_d : w_i(z, \gamma) > 0\}$, which violates the Lipschitz assumption behind the error control of implicit integrators.
Third, the point of the flow is the monotonicity of Proposition \ref{prop:flow_properties}(ii), and a discretization that does not preserve it forfeits the robustness that motivated the approach.
Section \ref{subsec:linearly_implicit_step} builds a step that handles stiffness and the jumps, Section \ref{subsec:acceptance} controls that step so that monotonicity survives discretization, Section \ref{subsec:window} discretizes the outer time scale of the multipliers, Section \ref{subsec:acceptance_criterion} decides when the discrete solution is good enough, and Section \ref{subsec:globalization} handles initialization and mesh adaptation.
Algorithm \ref{alg:skeleton} shows how these components fit together.

\begin{algorithm}[t]
\caption{\methodname{} (schematic)}
\label{alg:skeleton}
\small
\begin{algorithmic}[1]
\Require initial guess $z_0$, tolerance $\mathrm{tol}$, penalty seed $\mu_0$
\State $z \leftarrow \textsc{FeasibilityFlow}(z_0)$ \Comment{Sec.~\ref{subsec:globalization}}
\For{level $= 0, \ldots, n_{\mathrm{lev}}$} \Comment{one mesh}
  \State $\gamma \leftarrow 0$; $\mu \leftarrow \mu_0$
  \Repeat \Comment{one window: $\gamma, \mu$ frozen}
    \Repeat \Comment{one step}
      \State $v \leftarrow \textsc{LookAhead}(z)$ \Comment{Sec.~\ref{subsec:acceptance}}
      \State solve \eqref{eqn:li_step} at $v$ for $\Delta z$ \Comment{Def.~\ref{defn:li_step}}
      \State $(z, h) \leftarrow \textsc{AcceptAndControl}(v, \Delta z, h)$ \Comment{Lemma~\ref{lem:monotone_sequence}, \eqref{eqn:step_control}}
    \Until{window budget or stationarity signal}
    \State $(\gamma, \mu) \leftarrow \textsc{UpdateMultipliers}(z, \gamma, \mu)$ \Comment{\eqref{eqn:multiplier_update}, \eqref{eqn:penalty_update}}
  \Until{$\textsc{LevelStop}(z)$} \Comment{Def.~\ref{defn:criterion}}
  \If{$G(z) \leq \mathrm{tol}$ and $F(z) \leq \mathrm{tol}$} \Return $z$ \EndIf
  \State $z \leftarrow \textsc{Refine}(z)$ \Comment{Sec.~\ref{subsec:globalization}}
\EndFor
\end{algorithmic}
\end{algorithm}

\subsection{Linearly implicit step}
\label{subsec:linearly_implicit_step}

Stiffness calls for an implicit method, but a fully implicit step of \eqref{eqn:frozen_flow} requires solving a nonlinear system at every step, and the exact Jacobian $\nabla^2_{zz} L_A$ contains second derivatives of the dynamics, which are expensive, and the rank-one terms $\mu_i \nabla g_i \nabla g_i^T$ that appear and disappear with the active set.
We therefore take a \emph{linearly implicit} step: one linear solve per step, against a Jacobian model that is cheap, positive semidefinite, and already contains the terms the exact Jacobian is about to gain.
A \emph{step} in the rest of the section means one such update.

\begin{defn}[Gauss--Newton model on an anticipatory active set]
\label{defn:gn_metric}
For a margin $\varepsilon_\A \geq 0$, let $\A_{\varepsilon}(z, \gamma) := \E_d \cup \{ i \in \Ii_d : \gamma_i + \mu_i(g_i(z) + \varepsilon_\A) > 0 \} \supseteq \A(z, \gamma)$ and define
\begin{align}
\label{eqn:gn_metric}
H(z, \gamma) := H_0(z) + J_{\A_\varepsilon}(z)^T \operatorname{diag}(\mu_{\A_\varepsilon}) J_{\A_\varepsilon}(z),
\end{align}
where $H_0(z) \succeq 0$ is the Gauss--Newton Hessian of $J_d$ and $J_{\A_\varepsilon}$ stacks the gradients $\nabla g_i^T$ for $i \in \A_\varepsilon$.
\end{defn}
$H$ is the Gauss--Newton approximation of $\nabla^2_{zz} L_A$, obtained from \eqref{eqn:phr_gradients} by dropping the curvature terms $w_i \nabla^2 g_i$.
The margin $\varepsilon_\A$ enlarges the active set by the constraints within $\varepsilon_\A$ of switching, so that the model already carries the rank-one term the exact Jacobian will gain within the step.

\begin{defn}[Linearly implicit step]
\label{defn:li_step}
Given a linearization point $v$, frozen multipliers $\gamma$, a step $h > 0$ and $\theta \in [\tfrac12, 1]$, the $\theta$-method applied to \eqref{eqn:frozen_flow} with Jacobian model $-H(v, \gamma)$ is
\begin{align}
\label{eqn:li_step}
\big( I + h\theta H(v, \gamma) \big) \, \Delta z = -h \nabla_z L_A(v, \gamma),
\end{align}
followed by $z^+ = v + \alpha \Delta z$ with $\alpha \in (0, 1]$ chosen in Section \ref{subsec:acceptance}.
\methodname{} uses $\theta = 1$.
\end{defn}
Dividing \eqref{eqn:li_step} by $h\theta$ shows that $\Delta z$ solves $(H + \tfrac{1}{h\theta} I)\Delta z = -\tfrac{1}{\theta}\nabla_z L_A$: a Levenberg--Marquardt step with damping $1/(h\theta)$, which is a gradient step as $h \to 0$ and the Gauss--Newton step as $h \to \infty$.
The pseudo-time step therefore doubles as a trust-region radius, and the question of how large a step to take becomes the question of how far to trust the model $H$.
The next lemma provides the quantity that answers it.

\begin{lem}[Descent and predicted decrease]
\label{lem:li_step}
For every $h > 0$ and $\theta \in [\tfrac12, 1]$, the step $\Delta z$ of \eqref{eqn:li_step} exists, is unique, and satisfies $\nabla_z L_A^T \Delta z < 0$ whenever $\nabla_z L_A \neq 0$.
The quadratic model $m(\alpha) := L_A(v) + \alpha \nabla_z L_A^T \Delta z + \tfrac{\alpha^2}{2}\Delta z^T H \Delta z$ predicts the decrease
\begin{align}
\label{eqn:pred}
\mathrm{pred}(\alpha) := m(0) - m(\alpha) > 0.
\end{align}
\end{lem}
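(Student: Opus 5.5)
The plan is to reduce everything to positive definiteness of the matrix $A := I + h\theta H(v,\gamma)$. By Definition \ref{defn:gn_metric}, $H = H_0 + J_{\A_\varepsilon}^T \operatorname{diag}(\mu_{\A_\varepsilon}) J_{\A_\varepsilon}$. Here $H_0 \succeq 0$ by assumption, and the second term is positive semidefinite because every $\mu_i > 0$: for any $y$, $y^T J^T \operatorname{diag}(\mu) J y = \sum_i \mu_i (\nabla g_i^T y)^2 \geq 0$. So $H \succeq 0$. Since $h > 0$ and $\theta \in [\tfrac12,1]$ give $h\theta > 0$, we get $A \succeq I \succ 0$. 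It follows that $A$ is symmetric positive definite, hence invertible, and the step $\Delta z = -h A^{-1}\nabla_z L_A$ of \eqref{eqn:li_step} exists and is unique.

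For descent, I will write $g := \nabla_z L_A(v,\gamma)$. Then
\[
g^T \Delta z = -h\, g^T A^{-1} g .
\]
Because $A \succ 0$, also $A^{-1} \succ 0$, so $g^T A^{-1} g > 0$ whenever $g \neq 0$. This gives $g^T\Delta z < 0$. A quantitative version is available if it is wanted later for the acceptance test: the eigenvalues of $A^{-1}$ lie in $[1/(1+h\theta\lambda_{\max}(H)),\,1]$, which gives $g^T \Delta z \leq -h\|g\|^2/(1+h\theta\lambda_{\max}(H))$.

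For the predicted decrease, expanding the model gives
\[
\mathrm{pred}(\alpha) = -\alpha\, g^T\Delta z - \tfrac{\alpha^2}{2}\Delta z^T H \Delta z .
\]
The first term is positive for $\alpha>0$ by the descent step, but the second is nonpositive, so positivity needs an inequality relating the two. This is the main obstacle. I would substitute $g = -\tfrac1h A\Delta z$ to get
\[
-g^T\Delta z = \tfrac1h \|\Delta z\|^2 + \theta\, \Delta z^T H \Delta z .
\]
Then
\[
\mathrm{pred}(\alpha) = \tfrac{\alpha}{h}\|\Delta z\|^2 + \alpha\big(\theta - \tfrac{\alpha}{2}\big)\Delta z^T H\Delta z .
\]
For $\alpha \in (0,1]$ and $\theta \geq \tfrac12$ we have $\theta - \alpha/2 \geq 0$, and $\Delta z^T H \Delta z \geq 0$, so the second term is nonnegative. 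The first term is strictly positive since $\Delta z \neq 0$ whenever $g \neq 0$, because $A$ is invertible. Hence $\mathrm{pred}(\alpha) > 0$ for all $\alpha \in (0,1]$ whenever $\nabla_z L_A \neq 0$, which is exactly where the restriction $\theta \geq \tfrac12$ is used. When $g=0$, both $\Delta z$ and $\mathrm{pred}$ vanish, so the strict inequality \eqref{eqn:pred} should be read under the standing hypothesis $\nabla_z L_A \neq 0$ and $\alpha \in (0,1]$.
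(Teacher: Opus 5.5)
Your proof is correct and follows the paper's argument. Both use the same steps: $H \succeq 0$ as a sum of Gram matrices gives $I + h\theta H \succeq I$; descent follows from $-h\,\nabla_z L_A^T (I+h\theta H)^{-1}\nabla_z L_A < 0$; and positivity of $\mathrm{pred}(\alpha)$ comes from pairing \eqref{eqn:li_step} with $\Delta z$ and using $\alpha \le 1 \le 2\theta$. The only cosmetic difference is that you use this identity to eliminate $\nabla_z L_A^T \Delta z$, leaving $\tfrac{\alpha}{h}\|\Delta z\|^2 + \alpha(\theta - \tfrac{\alpha}{2})\Delta z^T H \Delta z$, whereas the paper eliminates $\Delta z^T H \Delta z$. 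Your caveat that strict positivity needs $\alpha > 0$ and $\nabla_z L_A \neq 0$ is correct; the paper leaves it implicit.
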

\begin{proof}
$H$ is a sum of Gram matrices, so $I + h\theta H \succeq I$ is invertible and $\nabla_z L_A^T \Delta z = -h \nabla_z L_A^T (I + h\theta H)^{-1} \nabla_z L_A < 0$.
From \eqref{eqn:li_step}, $\Delta z^T H \Delta z = -(h \nabla_z L_A^T \Delta z + \|\Delta z\|^2)/(h\theta)$; substituting into $m(\alpha)$ gives $\mathrm{pred}(\alpha) = \alpha ( 1 - \frac{\alpha}{2\theta} ) ( -\nabla_z L_A^T \Delta z ) + \frac{\alpha^2 \|\Delta z\|^2}{2 h \theta}$, which is positive since $\alpha \leq 1 \leq 2\theta$.
\end{proof}

\subsection{Acceptance and step control}
\label{subsec:acceptance}

A stiff integrator chooses $h$ from a local truncation-error estimate, which presumes a smooth vector field and misjudges the step whenever the active set changes.
We instead choose $h$ the way an optimizer would: a step is accepted only if $L_A$ decreases, which makes monotonicity a property of the discrete iterates by construction, and $h$ is grown or shrunk according to how well the model of Lemma \ref{lem:li_step} predicted that decrease.
Both tests use only values of $L_A$, so neither sees the Jacobian jumps.

The linearization point $v_k$ is either the last accepted iterate $z_k$ or an extrapolation of $z_k$ and $z_{k-1}$, which is discarded whenever $L_A(v_k, \gamma) > L_A(z_k, \gamma)$, so that $L_A(v_k, \gamma) \leq L_A(z_k, \gamma)$ in either case.
The trial $z^+(\alpha) = v_k + \alpha \Delta z$ is accepted, and becomes $z_{k+1}$, if and only if $L_A(z^+(\alpha), \gamma) < L_A(v_k, \gamma)$, starting at $\alpha = 1$ and backtracking on rejection; an accepted step advances pseudo-time by $\alpha h$.
Chaining the two inequalities gives the discrete counterpart of Proposition \ref{prop:flow_properties}(ii):
\begin{lem}[Monotone accepted sequence]
\label{lem:monotone_sequence}
The accepted iterates satisfy $L_A(z_{k+1}, \gamma) < L_A(z_k, \gamma)$ for every $k$, with or without look-ahead.
\end{lem}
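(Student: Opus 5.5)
The plan is to chain the two inequalities that the acceptance rule builds in, while keeping the multipliers $\gamma$ (and penalties $\mu$) fixed throughout the window. Lemma~\ref{lem:monotone_sequence} is stated for a single window, and $L_A(\cdot,\gamma;\mu)$ is one fixed function of $z$ there.

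\emph{Step 1: the linearization point.} I show that $L_A(v_k,\gamma)\le L_A(z_k,\gamma)$ in both cases. Without look-ahead, $v_k=z_k$ and equality holds. With look-ahead, the extrapolated point is kept only if $L_A(v_k,\gamma)\le L_A(z_k,\gamma)$. Otherwise it is discarded and replaced by $z_k$, which returns us to the first case.

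\emph{Step 2: the acceptance test.} By definition, $z_{k+1}=z^+(\alpha)$ is accepted only if $L_A(z^+(\alpha),\gamma)<L_A(v_k,\gamma)$. Chaining this with Step~1 gives
\begin{align*}
L_A(z_{k+1},\gamma) < L_A(v_k,\gamma) \le L_A(z_k,\gamma).
\end{align*}

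\emph{Step 3: existence of an accepted step.} The main subtlety is that Lemma~\ref{lem:monotone_sequence} speaks of accepted iterates, so the inequality holds vacuously for any iterate that is never produced. To make the statement meaningful, I would argue that backtracking terminates whenever $\nabla_z L_A(v_k,\gamma)\neq 0$. Lemma~\ref{lem:li_step} gives $\nabla_z L_A(v_k)^T\Delta z<0$. Since $L_A$ is $C^1$ (Definition~\ref{defn:phr_al}), a first-order Taylor expansion gives
\begin{align*}
L_A(v_k+\alpha\Delta z)=L_A(v_k)+\alpha\nabla_z L_A^T\Delta z+o(\alpha),
\end{align*}
and this is strictly below $L_A(v_k)$ for all sufficiently small $\alpha>0$. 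Geometric backtracking therefore accepts after finitely many trials.

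If instead $\nabla_z L_A(v_k)=0$, then $v_k$ is stationary and the window ends on the stationarity signal. In that case no new accepted iterate is generated, so the claim is unaffected.
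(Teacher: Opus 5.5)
Your proposal is correct and follows the paper's argument: with $\gamma$ and $\mu$ frozen on the window, the look-ahead rule gives $L_A(v_k,\gamma)\le L_A(z_k,\gamma)$ and the acceptance test gives $L_A(z_{k+1},\gamma)<L_A(v_k,\gamma)$, and the paper's proof is exactly this chaining. Your Step~3, which shows backtracking terminates using Lemma~\ref{lem:li_step} and the $C^1$ smoothness of $L_A$, is a well-posedness remark the paper does not make; the lemma does not need it, since it only concerns accepted iterates.
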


The step $h$ is controlled by the ratio $\varrho := \big( L_A(v) - L_A(z^+(\alpha)) \big) / \mathrm{pred}(\alpha)$ of actual to predicted decrease, as in a trust-region method \citep[Ch.~4]{nocedal2006}:
\begin{align}
\label{eqn:step_control}
h \leftarrow \begin{cases}
\min(g_+ h, h_{\max}), & \varrho > \eta_2 \text{ and } \alpha = 1, \\
\max(g_- h, h_{\min}), & \varrho < \eta_1, \\
h/2, & \alpha < 1, \\
h, & \text{otherwise,}
\end{cases}
\end{align}
with $0 < \eta_1 < \eta_2 < 1$ and $g_+ > 1 > g_- > 0$.
A large $\varrho$ means the Gauss--Newton model is accurate and $h$ grows toward the Gauss--Newton step; a small $\varrho$ shrinks it toward a gradient step.
The step-size controller is thus deciding how far to trust $H$, not how accurately to follow the flow in pseudo-time, which is why it can be insensitive to the jumps that defeat the truncation-error estimators of stiff integrators \citep{hairer1996}.

\subsection{Windows, multipliers and penalties}
\label{subsec:window}

The flow of Definition \ref{defn:two_time_scale_flow} holds $\gamma$ fixed while $z$ descends and updates $\gamma$ only once the inner problem is near stationarity.
We call the run of accepted steps between two multiplier updates a \emph{window}.
A window ends when its accumulated pseudo-time reaches $\Delta s$, or earlier when $\|\nabla_z L_A\|$ falls below $\epsilon_g$ or the decrease of $L_A$ stalls.
At the window boundary the feasibility criterion of Section \ref{subsec:acceptance_criterion} is tested; if it fails, the multipliers are updated by \eqref{eqn:multiplier_update} with $\tau_\gamma = \tfrac12$, and any penalty whose constraints are not converging is increased.

The penalty update acts on groups of constraints rather than on all of them at once.
Let $\nu_i := |g_i|$ for $i \in \E_d$ and $\nu_i := [g_i]_+$ for $i \in \Ii_d$ denote the violation of constraint $i$.
The rows of one named constraint of Problem \ref{prob:contact_sequence_ocp} form a \emph{term} $t$ with index set $\R_t$, violation $\mathrm{cv}_t := \max_{i \in \R_t} \nu_i$ and a shared penalty $\mu_t$, and a term whose violation has not contracted by $\kappa_\mu \in (0,1)$ since the previous boundary has its penalty increased:
\begin{align}
\label{eqn:penalty_update}
\mathrm{cv}_t > \kappa_\mu \, \mathrm{cv}_t^{\mathrm{prev}} \implies \mu_t \leftarrow \min(\varrho_\mu \mu_t, \mu_{\max}),
\end{align}
where $\varrho_\mu > 1 $.
The reason for per-term penalties is a degeneracy of the uniform choice.
Suppose the window has converged, $\nabla_z L_A = 0$, and apply \eqref{eqn:multiplier_update} at that $z$.
By \eqref{eqn:phr_gradients} the gradient becomes $\tau_\gamma \sum_i \mu_i\, c_i^{\mathrm{eff}} \nabla g_i$.
If $\mu_i \equiv \mu$ this is $\tau_\gamma \mu$ times the gradient of the infeasibility $\tfrac12 \sum_i (c_i^{\mathrm{eff}})^2$, so at an infeasible stationary point of the infeasibility the update does nothing and raising $\mu$ does not help.
Distinct $\mu_t$ weight the sum differently and generically move the iterate off such a point.

\subsection{Feasibility criterion}
\label{subsec:acceptance_criterion}

The transcription of Section \ref{subsec:transcription} enforces the path constraints of Problem \ref{prob:contact_sequence_ocp} at the CGL nodes.
Deciding when the discrete solution is good enough therefore involves two errors of different origin: the violation at the nodes, which the flow drives down, and the violation between nodes, which only refining the mesh can reduce.
The criterion measures both, so that the scheme can tell whether to keep flowing or to refine.
\begin{defn}[Node gate, fine-grid violation and criterion]
\label{defn:criterion}
Let
\begin{align}
\label{eqn:criterion}
G(z) &:= \max_i \nu_i(z), \\
F(z) &:= \max_e F_e(z), \\
F_e(z) &:= \max_{t_k \in [t_e, t_{e+1}]} \, \max_r \, \nu_r(t_k),
\end{align}
where $t_k = k \Delta t$ is a fine grid of physical times and $r$ runs over the path constraints of element $e$ evaluated on its polynomials.
$G$ is the \emph{node gate}, $F$ the \emph{fine-grid violation}, and the \emph{criterion} is $G(z) \leq \mathrm{tol}$ and $F(z) \leq \mathrm{tol}$.
\end{defn}
A \emph{level} is the sequence of windows run on one mesh.
Each time a window ends, the criterion is evaluated.
If it holds, the level ends and the solve is complete.
If $G \leq \mathrm{tol}$ but $F > \mathrm{tol}$, the nodes have converged and the remaining violation lies between them, which the flow cannot reduce; the level ends and the outer loop refines the mesh (Section \ref{subsec:globalization}).
Otherwise the flow continues into the next window.

\subsection{Outer loop}
\label{subsec:globalization}

Two things remain outside the inner iteration.
It converges to a stationary point of $\min_z L_A$ in whose basin it starts, so a poor initial guess must first be brought into a useful basin; and it can only reduce the nodal violation $G$, so when a level ends with $F > \mathrm{tol}$ the mesh must change.

\paragraph{Feasibility flow}
Before the objective and the multipliers enter, the flow \eqref{eqn:frozen_flow} is run on the special case of $L_A$ with $\gamma = 0$, a common penalty $\omega$ and $J_d$ dropped, so that $L_A$ reduces to $\Phi(z) := \tfrac{\omega}{2} \sum_i \nu_i(z)^2$.
The step \eqref{eqn:li_step} is then the Levenberg--Marquardt iteration for $\min_z \Phi(z)$ \citep[Ch.~10]{nocedal2006}, and this stage removes the bulk of the infeasibility of the initial guess, which in our experiments is a linear interpolation between boundary poses.

\paragraph{$h$-refinement}
When a level ends with $F > \mathrm{tol}$, each element receives the indicator $\eta_e := \max(\text{nodal violation on } e,\ F_e)$, and D\"orfler marking \citep{dorfler1996} bisects the elements with the largest indicators at the time of their worst violation \citep{darby2011hp}.
The children keep the parent's degrees, and the iterate is transferred to the new mesh by re-interpolating the nodal fields at the children's CGL nodes and splitting each B\'ezier wrench by de Casteljau subdivision \citep{farin2002}.
Interpolation at $p + 1$ nodes reproduces a degree-$p$ polynomial exactly, and de Casteljau subdivision returns the same curve on each half with control points that are convex combinations of the parent's, so Lemma \ref{lem:bezier_convex_contraints} applies to the children.
% \begin{lem}[Exact prolongation]
% \label{lem:prolongation}
% The prolonged iterate is unchanged as a function of time; every constraint value is preserved, and the children's wrench control points lie in the friction cone whenever the parent's do.
% \end{lem}
Because the constraint set changes, the multipliers are reset and the penalties reseeded; the refined iterate is nearly feasible, so the flow restarts with a large step.
The solve ends when the criterion holds, when $n_{\mathrm{lev}}$ refinements are exhausted, or when refinement stops reducing $G$, which indicates a wrong basin rather than a discretization error.
    \section{Simulation experiments}
\label{sec:simulation_experiments}

This section compares \methodname{} against the state of the art methods Crocoddyl \citep{croco} and Aligator \citep{aligator}.
All methods solve instances of Problem \ref{prob:contact_sequence_ocp} for different robot morphologies and under different constraints.
Subsection \ref{subsec:experimental_setup} describes the experimental setup that will allow us to fairly compare the different methods.
Subsection~\ref{subsec:scalability_to_convergence} compares optimization times and the success of the resulting trajectories under simulated tracking.
Subsection \ref{subsec:scalability_time_budget} illustrates how each method fares when given a time budget to solve the OCP.

\subsection{Experimental setup}
\label{subsec:experimental_setup}
To fairly compare the different methods, we define a set of benchmark problems and criteria to evaluate the solution.
All experiments are run in a machine with a 2x AMD EPYC 7742 64-Core Processor and for each experiment each method is allocated 12 physical cores.

\subsubsection{Benchmark Problems}
The results presented here comprise ten benchmark problems across two tasks: walking over uneven terrain and jumping over a fence (Figure~\ref{fig:timelapse_scenarios}).
Each task is evaluated on five robot platforms: a planar hopper($n_j=2$), Unitree's Go2 quadruped ($n_j=12$), H1 biped ($n_j=19$), G1 humanoid ($n_j=29$), and PAL Robotics' Talos humanoid ($n_j=32$).
All methods solve the same optimization problem to the extend their formulations allow: same cost function, constraint and initial guess.
 
\subsubsection{Solution quality criteria}
Each optimization method produces optimal trajectories for the states and torques at the collocation nodes.
Small constraint residuals at the collocation nodes do not ensure that a tracking controller can execute the resulting motion and reach the prescribed terminal state.
We therefore evaluate the references through closed-loop simulation and assess terminal goal accuracy separately from optimization feasibility.
A whole-body inverse-dynamics controller tracks each reference in MuJoCo, using the same controller configuration for all optimization methods on a given task and robot.
We refer to the state obtained by numerically integrating the robot dynamics under the controller's torques and the simulator's contact forces as the simulated rollout.

For the solve-time comparisons, success is determined by the rollout's error relative to the problem's prescribed terminal rest pose and foot placements.
The position errors and residual speeds must be strictly below the six thresholds in Table~\ref{tab:tracking_success_thresholds}, with identical numerical thresholds for every robot and method.

\begin{table}[t]
    \centering
    \footnotesize
    \setlength{\tabcolsep}{4pt}
    \begin{tabular}{@{}llr@{}}
        \toprule
        \textbf{Quantity} & \textbf{Error metric} & \textbf{Threshold} \\
        \midrule
        Base position
        & Euclidean position error
        & $0.050$ m \\

        Base orientation
        & Max. absolute wrapped RPY error
        & $0.10$ rad \\

        Joint position
        & Max. absolute actuated-joint error
        & $0.20$ rad \\

        Foot position
        & Max. Euclidean foot error
        & $0.040$ m \\

        Base linear speed
        & Linear velocity norm
        & $0.10$ m/s \\

        Joint speed
        & Max. absolute actuated-joint velocity
        & $0.10$ rad/s \\
        \bottomrule
    \end{tabular}
    \caption{Terminal goal thresholds shared by all robots and methods.
    Each metric must be strictly below its threshold at the final time of the rollout.}
    \label{tab:tracking_success_thresholds}
\end{table}

\subsection{Scalability to convergence}
\label{subsec:scalability_to_convergence}
We compare \methodname{}, Crocoddyl and Aligator on the Fence and Uneven terrain benchmark.
Figure~\ref{fig:timelapse_scenarios} shows the rollout solution produced by \methodname{} for all the scenarios and all the robots.
The goal of this subsection is to compare the solve time of the different methods for solutions whose rollout satisfy the success criteria.

Table~\ref{tab:solve_time_results} reports the optimization time of the reference trajectories whose simulated rollouts satisfy the terminal tracking criteria in Table~\ref{tab:tracking_success_thresholds}, with the five robot platforms across the columns and the two tasks down the rows.
Crosses indicate that the corresponding rollout does not satisfy these criteria.
\revise{Thus, the table assesses the computational cost of producing successfully tracked references, rather than convergence to a common optimizer stopping tolerance.}{Each entry is the solver time at which a method produced its final iterate within a 14400~s budget.
That iterate is the reference tracked in simulation. 
All three methods stop early once the iterate satisfies the shared feasibility test (maximum node and fine-grid constraint violation below $10^{-3}$) or their own convergence test.}
Across the ten displayed cases, \methodname{} satisfies the tracking criteria in all ten, compared with seven for Crocoddyl and eight for Aligator.

\begin{table}[!t]
    \centering
    \footnotesize
    \setlength{\tabcolsep}{4pt}
    \begin{tabular}{@{}lrrrrr@{}}
        \toprule
        Solve time [s] & Hopper & Go2 & H1 & G1 & Talos \\
        $n_j$ & 2 & 12 & 19 & 29 & 32 \\
        \midrule
        \multicolumn{6}{@{}l}{\textit{Uneven terrain}} \\
        \textcolor{methodleap}{LEAP} & $\mathbf{20.9}$ & $79.8$ & $\mathbf{13693}$ & $\mathbf{1218}$ & $\mathbf{4096}$ \\
        \textcolor{methodcrocoddyl}{Crocoddyl} & $143$ & $\mathbf{78.1}$ & \revise{$14397$}{$14400^\dagger$} & $1425$ & \textcolor{methodcrocoddyl}{$\times$} \\
        \textcolor{methodaligator}{Aligator} & $1201$ & $487$ & \revise{$14397$}{$14400^\dagger$} & $2121$ & \textcolor{methodaligator}{$\times$} \\
        \midrule
        \multicolumn{6}{@{}l}{\textit{Fence jumping}} \\
        \textcolor{methodleap}{LEAP} & $\mathbf{5.2}$ & $\mathbf{71.6}$ & $176$ & $552$ & $\mathbf{658}$ \\
        \textcolor{methodcrocoddyl}{Crocoddyl} & \textcolor{methodcrocoddyl}{$\times$} & $557$ & $\mathbf{34.4}$ & $1867$ & \textcolor{methodcrocoddyl}{$\times$} \\
        \textcolor{methodaligator}{Aligator} & \textcolor{methodaligator}{$\times$} & $1200$ & \revise{$14399$}{$14400^\dagger$} & $\mathbf{301}$ & $8443$ \\
        \bottomrule
    \end{tabular}
    \caption{Optimization time of the reference trajectories on both benchmarks.
    Each entry is the solve time in seconds of the reference whose simulated rollout satisfies the terminal-goal criteria of Table~\ref{tab:tracking_success_thresholds}.
    $\times$ marks a method whose rollout does not\added{, and $\dagger$ a run that reached the budget without meeting the feasibility test}.
    \textbf{Bold} marks the fastest successful method for each problem, times above
    100~s are rounded to the second and method colors match
    Figure~\ref{fig:budget_violation}.
    $n_j$ is the actuated joint count.}
    \label{tab:solve_time_results}
\end{table}

Table~\ref{tab:terminal_goal_results} summarizes each rollout by its largest normalized terminal error in the terminal state.
\methodname{} attains the smallest normalized error on nine of the ten problems. 
The H1 fence-jumping experiment is the only instance where a comparison method obtains better terminal accuracy.

% Generated by plotting_paper_code/terminal_goal_table/make_table.py.
% Do not edit by hand; re-run the generator and paper/sync_figures.sh.
\definecolor{methodleap}{rgb}{0.000,0.450,0.700}
\definecolor{methodaligator}{rgb}{0.900,0.620,0.000}
\definecolor{methodcrocoddyl}{rgb}{0.800,0.400,0.650}
\begin{table}[!t]
    \centering
    \footnotesize
    \setlength{\tabcolsep}{5pt}
    \begin{tabular}{@{}llrr@{}}
        \toprule
        & & \multicolumn{2}{c}{$\max_i e_i/\bar{e}_i$} \\
        \cmidrule(lr){3-4}
        Robot & Method & Uneven terrain & Fence jumping \\
        \midrule
        \multirow{3}{*}{Hopper (2)} & \textcolor{methodleap}{LEAP} & $\mathbf{0.12}$ & $\mathbf{0.89}$ \\
         & \textcolor{methodcrocoddyl}{Crocoddyl} & $0.14$ & $147$ \\
         & \textcolor{methodaligator}{Aligator} & $0.17$ & $14$ \\
        \cmidrule(lr){1-4}
        \multirow{3}{*}{Go2 (12)} & \textcolor{methodleap}{LEAP} & $\mathbf{0.23}$ & $\mathbf{0.038}$ \\
         & \textcolor{methodcrocoddyl}{Crocoddyl} & $0.32$ & $0.31$ \\
         & \textcolor{methodaligator}{Aligator} & $0.36$ & $0.76$ \\
        \cmidrule(lr){1-4}
        \multirow{3}{*}{H1 (19)} & \textcolor{methodleap}{LEAP} & $\mathbf{0.028}$ & $0.37$ \\
         & \textcolor{methodcrocoddyl}{Crocoddyl} & $0.13$ & $0.21$ \\
         & \textcolor{methodaligator}{Aligator} & $0.18$ & $\mathbf{0.16}$ \\
        \cmidrule(lr){1-4}
        \multirow{3}{*}{G1 (29)} & \textcolor{methodleap}{LEAP} & $\mathbf{0.016}$ & $\mathbf{0.0014}$ \\
         & \textcolor{methodcrocoddyl}{Crocoddyl} & $0.047$ & $0.24$ \\
         & \textcolor{methodaligator}{Aligator} & $0.048$ & $0.34$ \\
        \cmidrule(lr){1-4}
        \multirow{3}{*}{Talos (32)} & \textcolor{methodleap}{LEAP} & $\mathbf{0.13}$ & $\mathbf{0.0023}$ \\
         & \textcolor{methodcrocoddyl}{Crocoddyl} & $633$ & $678$ \\
         & \textcolor{methodaligator}{Aligator} & $569$ & $0.0053$ \\
        \bottomrule
    \end{tabular}
    \caption{Terminal goal accuracy of the simulated rollouts on both benchmarks.
    Each entry is $\max_i e_i/\bar{e}_i$, the largest of the six errors of
    Table~\ref{tab:tracking_success_thresholds} divided by its threshold evaluated at the terminal time.
    A value below one means the rollout reaches the goal.
    \textbf{Bold} marks the smallest value for each problem, and method colors match
    Table~\ref{tab:solve_time_results} and Figure~\ref{fig:budget_violation}.
    Parentheses give the actuated joint count, listed as terrain/fence for the Hopper,
    which the two tasks pose differently.
    It can be seen that for $9/10$ cases \methodname{} \revise{achieves better constraint satisfaction}{attains the smaller terminal error}.
    }
    
    \label{tab:terminal_goal_results}
\end{table}

\begin{figure}[!t]
    \centering
    \includegraphics[width=\columnwidth]{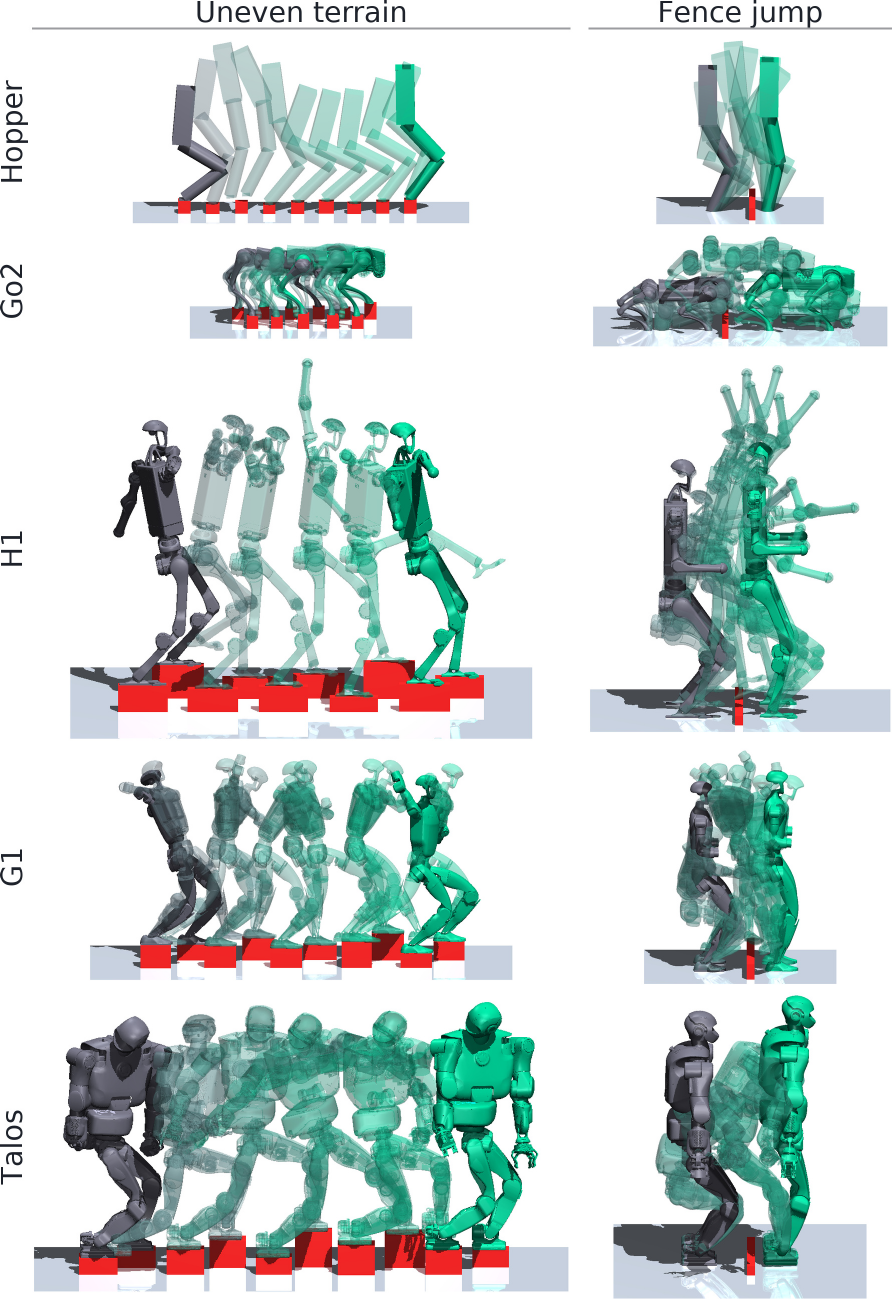}
    \caption{
    Simulated tracking of \methodname{} trajectories for both tasks, in side view: walking over uneven terrain (left) and fence jumping (right).
    Obstacles are red, and robot configurations are colored by elapsed time within that robot's plan, from the initial pose in solid gray to the final pose in solid green.
    The stepping stones are placed at distinct heights and positions making the gait non-periodic, and self-collision and collision with the fence are included in the optimization formulation.
    }
    \label{fig:timelapse_scenarios}
\end{figure}

\begin{figure}[!t]
    \centering
    \includegraphics[width=\columnwidth]{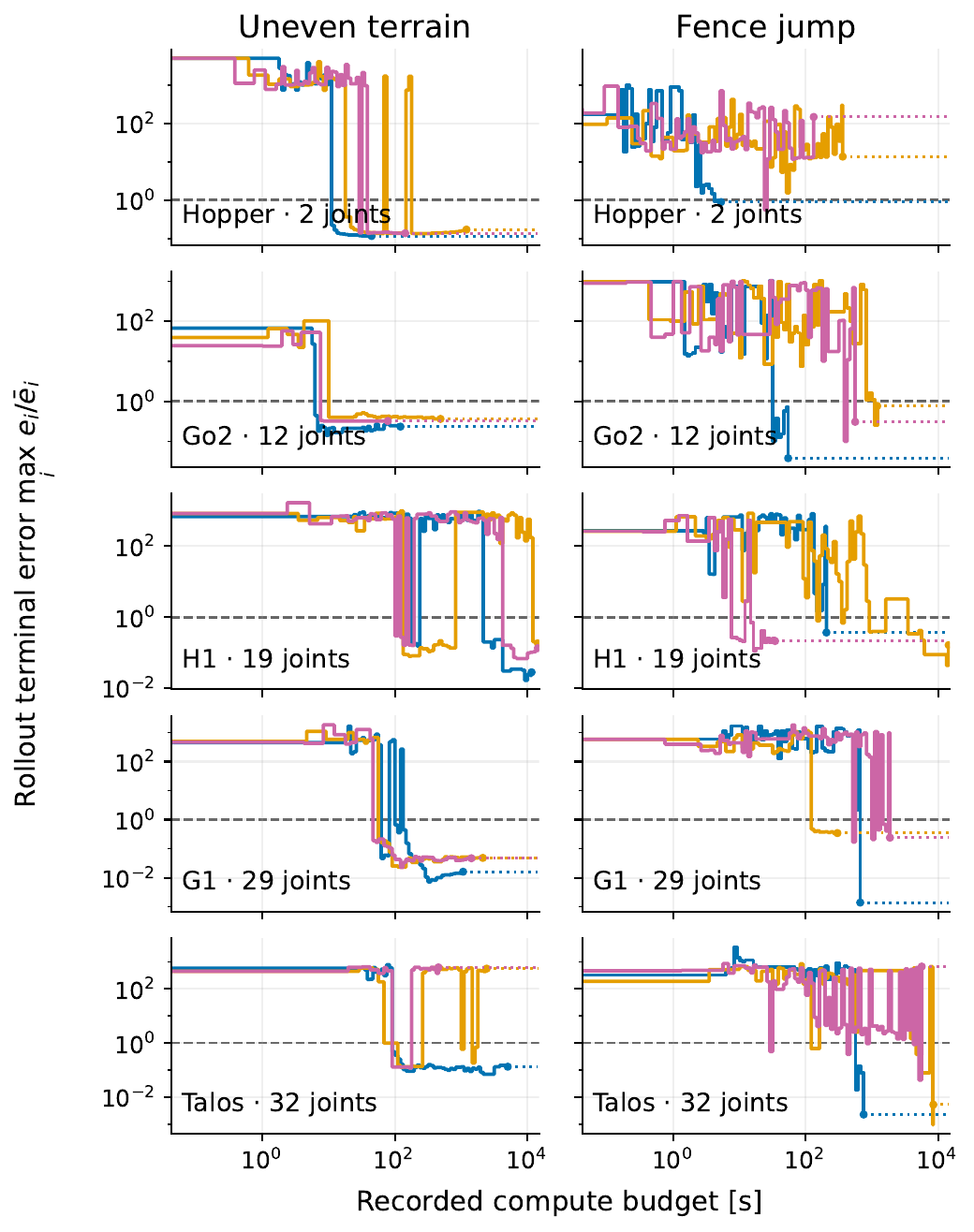}
    \caption{
    This figure shows terminal goal error $\max_i e_i/\bar{e}_i$ (Table~\ref{tab:tracking_success_thresholds}) of the rollout versus compute budget.
    The dashed line at one is the success criterion.
    At each budget value we simulate the solution with the smallest node violation recorded up to that point.
    A dot marks where the run passed the solver's convergence test or stalled, and the dotted tail carries its final iterate, the reference of Tables~\ref{tab:solve_time_results} and~\ref{tab:terminal_goal_results}, to larger budgets.
    \textcolor{methodleap}{\methodname{}} (blue), \textcolor{methodaligator}{Aligator} (orange), \textcolor{methodcrocoddyl}{Crocoddyl} (pink).
    Counting the budget from which the retained reference stays trackable, \methodname{} is earliest on seven of ten problems.
    }
    \label{fig:budget_violation}
\end{figure}

\subsection{Scalability under a time budget}
\label{subsec:scalability_time_budget}

We next examine how the quality of the simulated rollout improves with the available computation time for the different trajectory optimizers.
For each compute budget we retain, the recorded candidate with the smallest node constraint violation available, track it in simulation with the controller of Section~\ref{subsec:experimental_setup}, and report the normalized terminal error $\max_i e_i/\bar{e}_i$ of Table~\ref{tab:terminal_goal_results}.
Figure~\ref{fig:budget_violation} plots this error against the budget, with uneven terrain in the left column and fence jumping in the right column.
A curve below the dashed line at one means that the reference available at that budget is tracked to the terminal goal.
Counting the budget from which the retained reference stays trackable to the end of the run, \methodname{} reaches that point first on seven of the ten problems.

    \section{Hardware Experiments}
\label{sec:hardware_experiments}
In this section we evaluate whether a hopscotch trajectory generated by \methodname{} can be executed on a physical quadruped using a tracking controller.
The task combines forward jumps, alternating diagonal support, and a widened four-foot stance, as illustrated in Figure~\ref{fig:cover}.

\subsection{Hardware Setup}
\label{subsec:hardware_setup}
Experiments use a Unitree Go2 EDU quadruped with 12 actuated joints and a low-level torque-control interface.
Robot measurements and control commands are exchanged through the Unitree SDK~2~\citep{unitree_sdk2}.
Joint positions and the base orientation are obtained from onboard measurements.
A Vicon motion-capture system records the position of the floating base in the torso.

\subsection{Controller and System Identification}
\label{subsec:controller_sysid}

Execution in hardware follows the reference-generation and feedback-tracking approach used in Section~\ref{sec:simulation_experiments}.
To reduce the model mismatch we employ system identification ~\citep{zhang2025provably,zhang2024system} and the tracking controller is trained with domain randomization to accommodate remaining discrepancies between simulation and hardware.
The controller is inspired by Deep Tracking Control~\citep{hutterDTC} and observes joint angles, floating-base position and orientation, and base angular velocity.
\methodname{} supplies the reference motion, while the controller provides feedback corrections during execution.

\subsection{Results}
\label{subsec:results_hardware}
Table~\ref{tab:hardware_tracking} summarizes the errors for the execution shown in Figure~\ref{fig:cover}.
This execution demonstrates that a \methodname{} reference can guide the robot through the hopscotch sequence with centimeter-scale RMS position errors.

\begin{table}[!htbp]
    \centering
    \footnotesize
    \setlength{\tabcolsep}{4pt}
    \begin{tabular}{@{}lrrr@{}}
        \toprule
        \textbf{Quantity} & \textbf{RMS error}
        & \shortstack[r]{\textbf{Peak absolute}\\\textbf{error}}
        & \shortstack[r]{\textbf{Final absolute}\\\textbf{error}} \\
        \midrule
        Joint position [rad] & $0.0682$ & $0.4455$ & $0.1546$ \\
        Base roll [rad]      & $0.0249$ & $0.0572$ & $0.0435$ \\
        Base pitch [rad]     & $0.0426$ & $0.1093$ & $0.0670$ \\
        Base yaw [rad]       & $0.0113$ & $0.0309$ & $0.0107$ \\
        Base $x$ [m]         & $0.0424$ & $0.0917$ & $0.0309$ \\
        Base $y$ [m]         & $0.0347$ & $0.0769$ & $0.0093$ \\
        Base $z$ [m]         & $0.0207$ & $0.0729$ & $0.0065$ \\
        \bottomrule
    \end{tabular}
    \caption{Tracking errors for hardware execution of the hopscotch trajectory.}
    \label{tab:hardware_tracking}
\end{table}

    \section{Conclusion}
\label{sec:conclusion}
This paper presented \methodname{}, a gradient-flow method for trajectory optimization of robots with prescribed contact sequences.
The method combines pseudospectral collocation with a two-time-scale PHR augmented-Lagrangian flow whose equilibria satisfy the KKT conditions of the transcribed problem.
B\'ezier wrench parameterizations certify convex contact-wrench constraints throughout each interval when their control points are feasible.
A linearly implicit Gauss--Newton integrator preserves descent at fixed multipliers and penalties, while feasibility initialization and adaptive mesh refinement address initial infeasibility and violations between collocation nodes.

Across ten benchmarks on five robot platforms, \methodname{} meets the terminal tracking criteria in all ten simulated rollouts, compared with seven for Crocoddyl and eight for Aligator.
It achieves substantial solve-time reductions on several benchmarks and lower normalized terminal error over a range of compute budgets.
A hopscotch execution on the Unitree Go2 further demonstrates hardware tracking of an optimized reference, with a joint-position RMS error of $0.0682$~rad and centimeter-scale RMS position errors.
We will release all code upon final acceptance of this paper

    % Bibliography
    {\footnotesize
    \bibliographystyle{plainnat}
    \bibliography{references}
    }
\end{document}